\newtheorem{theorem}{Theorem}
\title{Optimization-Based Algebraic Multigrid Coarsening Using Reinforcement Learning}
\author{%
  Ali Taghibakhshi \\
  Mechanical Science and Engineering\\
  University of Illinois at Urbana-Champaign\\
  Urbana, IL 61801, USA \\
  \texttt{alit2@illinois.edu} \\
  \And
  Scott MacLachlan \\
  Mathematics and Statistics\\
  Memorial University of Newfoundland\\
  and Labrador\\
  St. John's, NL, Canada\\
  \texttt{smaclachlan@mun.ca}\\
  \And
  Luke Olson \\
  Computer Science\\
  University of Illinois at Urbana-Champaign\\
  Urbana, IL 61801, USA \\
  \texttt{lukeo@illinois.edu} \\
  \And
  Matthew West \\
  Mechanical Science and Engineering\\
  University of Illinois at Urbana-Champaign\\
  Urbana, IL 61801, USA \\
  \texttt{mwest@illinois.edu} \\
}
\begin{document}

\maketitle

\begin{abstract}
  Large sparse linear systems of equations are ubiquitous in science and engineering, such as those arising from discretizations of partial differential equations. Algebraic multigrid (AMG) methods are one of the most common methods of solving such linear systems, with an extensive body of underlying mathematical theory. A system of linear equations defines a graph on the set of unknowns and each level of a multigrid solver requires the selection of an appropriate coarse graph along with restriction and interpolation operators that map to and from the coarse representation. The efficiency of the multigrid solver depends critically on this selection and many selection methods have been developed over the years. Recently, it has been demonstrated that it is possible to directly learn the AMG interpolation and restriction operators, given a coarse graph selection. In this paper, we consider the complementary problem of learning to coarsen graphs for a multigrid solver, a necessary step in developing fully learnable AMG methods. We propose a method using a reinforcement learning (RL) agent based on graph neural networks (GNNs), which can learn to perform graph coarsening on small planar training graphs and then be applied to unstructured large planar graphs, assuming bounded node degree. We demonstrate that this method can produce better coarse graphs than existing algorithms, even as the graph size increases and other properties of the graph are varied. We also propose an efficient inference procedure for performing graph coarsening that results in linear time complexity in graph size.
\end{abstract}

\section{Introduction}\label{sec:introduction}

Algebraic multigrid (AMG)~\cite{brandt1984algebraic, ruge1987algebraic, KStuben_1983a} is a widely used method for approximating the solution to large, sparse linear systems, particularly those arising from elliptic boundary value problems.  AMG has proven to be effective for a variety
of problems, including partial differential equations (PDEs), image and video processing~\cite{DChen_etal_2011a, Kundu_2016_CVPR}, and sparse Markov chains~\cite{HDeSterck_etal_2008a}.  The focus of this work is on developing reinforcement learning agents to construct optimal AMG methods for improved efficiency and robustness.

AMG algorithms aim to solve a sparse linear system of the form
\begin{equation}\label{eq:Axb}
    Ax = b
\end{equation}
by successively constructing coarser representations of the problem.  Constructing an AMG method is effectively a graph coarsening problem, to define a coarse representation of $A_c x_c = b_c$, and an interpolation problem, to define the transfer of solutions between the coarse and original (or fine) representation of the problem.

AMG has strong theoretical support for model problems~\cite{ruge1987algebraic,ABrandt_1986a,JMandel_1988a}, but it is
important to recognize that the classical \textit{algorithm} for AMG relies heavily on heuristics.  Thus, there is a persistent need to both improve the general theoretical foundations of AMG~\cite{SPMacLachlan_TAManteuffel_SFMcCormick_2006a, ANapov_YNotay_2012a, JBrannick_etal_2013,2014_MaOl_amgtheory} and to improve its efficiency, including by leveraging recent advances in machine learning. There are two main heuristics used within AMG, namely selecting the coarse nodes for a given matrix (or graph) and construction of the interpolation operator between coarse and fine graphs. Recently, there have been studies on developing machine learning techniques to construct the interpolation operator~\cite{luz2020learning, greenfeld2019learning}. However, these methods require prior knowledge of the fine-coarse partitioning and the interpolation sparsity pattern. Hence, developing a learnable coarse-grid selection method is a necessary step in the development of fully learnable AMG algorithms. While much heuristic work has been done on the graph coarsening problem in this context (e.g.,~\cite{cleary1998coarse, DMAlber_LNOlson_2007a}), optimal partitioning into coarse and fine nodes is known to be NP hard~\cite{maclachlan2007greedy}.
Nevertheless, the ML-based coarsening scheme that we propose yields a
  strictly better coarsening for a class of AMG solvers on planar grids
  compared to previous algorithms. We expand on this notion in Appendix~D,
  where we note the limitations of comparing directly to the observed
convergence of AMG algorithms based only on heuristics.

As we review in Section~\ref{sec:grid-coars}, the graph of $A$ in~(\ref{eq:Axb}) can be partitioned into coarse and fine (a.k.a.\ fine-only) nodes; this has previously been presented as an optimization problem~\cite{maclachlan2007greedy} and accompanied with a greedy algorithm for approximating the solution.
However, for computational feasibility, this approach is rather simple and is known to break down in some cases \cite{Zaman_2021a}. An alternative is to use combinatorial optimization approaches such as simulated annealing \cite{Zaman_2021a}. While this produces good coarsenings, it is infeasibly expensive for large graphs. In this study, we develop a reinforcement learning method to achieve scalable (linear order in graph size) and high-quality fine-coarse partitioning, focusing on discretizations of the 2D Poisson equation,
\begin{equation}
  \label{eq:Poisson}
  -\Delta\phi = f,
\end{equation}
where $\Delta$ is the Laplace operator and $f(x,y)$ and $\phi(x,y)$ are real-valued functions.

To the best of our knowledge, this work is the first to approach AMG coarsening using machine learning. Based on the optimization formulation of the coarsening problem introduced in~\cite{maclachlan2007greedy} and outlined in more detail in Appendix~D, we train a dueling double-DQN agent with topology adaptive convolution layers (TAGCN)~\cite{du2017topology} to yield a more efficient coarsening, as compared to previous studies.  This optimization problem is tightly coupled, so that selecting one node to be in the coarse problem affects the eligibility of its neighbouring nodes for selection.  Therefore, the process of coarsening is temporal in nature, and we model it as an RL problem, as discussed in Section~\ref{sec:reinf-learn-mult}. By using the optimization framework~\cite{maclachlan2007greedy} as our starting point, we are able to prove that our RL method produces a convergent multigrid solver, and by combining evaluation with a graph decomposition algorithm \cite{bell2008algebraic, lloyd1982least} we are able to coarsen efficiently.

The main contribution of this paper is to propose an RL method to train a grid-coarsening agent for the 2D Poisson problem which: (1) can coarsen arbitrary unstructured planar grids, including those much larger than the training grids, assuming bounded node degree, (2) can coarsen a grid with computational expense that is linear in the grid size (Theorem~\ref{thm:time_complexity}), (3) is guaranteed to produce a convergent multigrid method (Theorem~\ref{thm:convergence}), and (4) produces superior coarse grids to existing heuristic methods (Section~\ref{sec:numer-exper}).

\section{Related work}
\label{sec:related-work}

\paragraph{Machine learning for multigrid.}
While there is a long history of applying machine learning to multigrid algorithms \cite{CWOosterlee_RWienands_2003a}, the last two years have seen a particular focus on using neural networks to generate multigrid interpolation operators. Greenfield et al.~\cite{greenfeld2019learning} used a residual architecture and unsupervised learning to train a fully-connected neural network that could outperform conventional methods in generating the interpolation operator for diffusion problems on structured grids. In a later study \cite{luz2020learning}, the model was further improved by employing graph neural networks and extending to positive semi-definite $A$ matrices. Given the coarse-fine splitting and the interpolation sparsity pattern, these neural networks are able to generate an interpolation operator which outperforms classical AMG methods in convergence factor. In other work, Schmitt et al.~\cite{schmitt2019optimizing} used evolutionary computation methods to optimize a GMG algorithm. By learning from supervised data and using a U-Net architecture, Hsieh et al.~\cite{hsieh2019learning} trained a convolutional network to improve a GMG algorithm for the structured Poisson problem. Katrutsa et al.~\cite{katrutsa2020black} formulated the two-level V-cyle problem as a deep neural network and, using this formulation, they optimized the interpolation operator for GMG and evaluated the method on 1D structured grids.

\paragraph{Reinforcement learning.}
Reinforcement learning (RL) algorithms have demonstrated success in challenging decision-making problems such as complex games~\cite{Schrittwieser2020MuZero} and robotics~\cite{Ibarz_2021}. RL algorithms largely fall into two categories: \textit{value learning} and \textit{policy optimization}. Value learning algorithms such as Deep Q-Learning~\cite{mnih2015human} attempt to learn the so-called \textit{Q function}, a function that guides an agent to select the best action in a given state. In contrast, policy optimization algorithms such as Proximal Policy Optimization (PPO)~\cite{schulman2017proximal} try to maximize the possibility of actions that lead to higher rewards. In this paper, we use Dueling Double DQN \cite{van2016deep, wang2016dueling}.

\paragraph{Graph neural networks.}
The natural modeling of many problems as graphs where local structure is important has led to the development of graph convolutional neural networks (GCNNs). GCNNS are commonly categorized into two main types: spectral and spatial \cite{wu2020comprehensive}.
First introduced by Bruna et al.~\cite{bruna2013spectral}, spectral methods define graph convolution as diagonal operators in the graph Laplacian eigenbasis. These methods, however, are highly domain specific and, once trained, they cannot adapt to different graph structures. Moreover, due to the Laplacian matrix eigendecomposition, they are computationally expensive. However, a number of recent studies \cite{defferrard2016convolutional, kipf2016semi} have investigated how these limitations can be alleviated.
Spatial methods, on the other hand, define graph convolution by locally propagating information. Message Passing Neural Networks (MPNNs) are a popular framework for spatial GCNNs \cite{gilmer2017neural}. A more general message passing framework was introduced by Battaglia et al.~\cite{battaglia2018relational}. They introduced graph network blocks for learning relational data in the graph structure. Du et al.~\cite{du2017topology} introduced TAGCN, a GNN that is defined in the vertex domain of the graph. They have shown that the topologies of the learnable filters in TAGCN are adaptive to the topology of the graph on which convolution is performed. In this paper, we use a TAGCN for our RL agent.

\section{Multigrid background}\label{sec:multigrid-background}

AMG algorithms infer a ``grid'' structure by using the graph of $A$
in~\eqref{eq:Axb}. In the \textit{setup} phase, a coarser grid with $n_c<n$
nodes is constructed, by selecting a subset of the nodes of the current grid.
A full-rank interpolation operator, $P\in \mathbb{R}^{n\times n_c}$, is
created to map vectors from the coarse grid to the fine grid, and a coarse-grid
operator, $A_c\in \mathbb{R}^{n_c\times n_c}$, is also created,
often using the Galerkin product, $A_c = P^T A P$.  After setup, the AMG
\textit{solve} phase is executed by taking an initial guess for the solution,
$x^{(0)}\in \mathbb{R}^{n}$, and performing a number of pre-relaxation
(iterative linear
solver) sweeps on the finest grid, often with Gauss-Seidel or weighted Jacobi,
to damp high-energy errors in $x^{(0)}$. Then, the residual is restricted to
the coarse grid, and the error equation, in the form of~\eqref{eq:Axb}, is
solved,
and the solution is interpolation and added to the fine-level solution.
Finally, a number of post-relaxation sweeps are performed to the corrected
approximation. The AMG solve phase, which
is analogous to that of geometric
multigrid (GMG)~\cite{briggs2000multigrid},
is outlined in Algorithm~\ref{alg:2level}.  Note that while we give the
two-grid cycle only, the solution of the system at Step~\ref{step:CGC} can also
be done approximately, recursing with the same algorithm to an arbitrary number
of levels.
\begin{algorithm}[t]
\caption{Two-Level AMG Algorithm}\label{alg:2level}
\begin{algorithmic}[1]
\State \textbf{Input}: Sparse matrix $A\in \mathbb{R}^{n\times n}$, right-hand side $b\in \mathbb{R}^{n}$, initial guess $x^{(0)}\in \mathbb{R}^{n}$, interpolation matrix $P\in \mathbb{R}^{n\times n_c}$, coarse-grid matrix $A_{c}\in\mathbb{R}^{n_c\times n_c}$, convergence tolerance $\delta$, numbers of relaxation sweeps $N_{1}, N_{2} \in \mathbb{N}$, and $k=0$. 
\State \textbf{repeat:}
\State \;\;\;\;\;  Perform $N_{1}$ pre-relaxation sweeps on $x^{(k)}$ to obtain $\tilde{x}^{(k)}$
\State \;\;\;\;\;  Calculate the residual: $\tilde{r}^{(k)} = b - A\tilde{x}^{(k)}$
\State \;\;\;\;\;  Project the residual to the coarse grid and solve: $A_{c}e_{c}^{(k)} = P^{T}\tilde{r}^{(k)}$.\label{step:CGC}
\State \;\;\;\;\;  Interpolate and add the coarse-grid correction: $\hat{x}^{(k)} = \tilde{x}^{(k)} +Pe_{c}^{(k)}$
\State \;\;\;\;\;  Perform $N_{2}$ post-relaxation sweeps on $\hat{x}^{(k)}$ to get $x^{(k+1)}$
\State \;\;\;\;\; $k = k+1$, compute $r^{(k+1)} = b - Ax^{(k+1)}$
\State \textbf{until:} $||r^{(k+1)}||<\delta$
\end{algorithmic}
\end{algorithm}

Since the AMG solution phase is an iterative algorithm, we must assess both the cost per cycle (measured by its \textit{grid complexity}) and the convergence factor (measured by the reduction in the residual in each iteration) in order to quantify efficiency.  Of these, convergence is more thoroughly studied.  From a theoretical perspective, typical convergence results rely on making assumptions on the coarsening and construction of the interpolation operator~\cite{ABrandt_1986a, JMandel_1988a, SPMacLachlan_TAManteuffel_SFMcCormick_2006a, ANapov_YNotay_2012a, JBrannick_etal_2013} to bound the per-cycle convergence factor.  Alternately, heuristic or other techniques (including ML approaches \cite{luz2020learning, greenfeld2019learning}) are used to generate coarsenings and interpolation operators that may lead to good convergence.
In contrast, grid complexity, defined as the sum of the grid sizes over all levels in the hierarchy divided by the size of the finest grid, measures the computational complexity of a cycle of an AMG algorithm, assuming that the dominant costs in the algorithm are proportional to the number of degrees of freedom on each level.  Notably, however, a common theme in the theoretical approaches mentioned above~\cite{SPMacLachlan_TAManteuffel_SFMcCormick_2006a, ANapov_YNotay_2012a, JBrannick_etal_2013} is that it is difficult to guarantee a fixed bound on the grid complexity while simultaneously bounding the resulting AMG convergence factor.

\subsection{Grid coarsening and greedy coarsening}
\label{sec:grid-coars}

Classical AMG coarsening is based on heuristics to generate maximal independent subsets of the graph on each level \cite{ruge1987algebraic}.  While often effective, there are no guarantees on either the grid complexities of the generated hierarchies or the convergence properties of the resulting algorithm.  In an attempt to address this, MacLachlan and Saad \cite{maclachlan2007greedy} proposed an optimization form of the coarsening problem, building on the work of MacLachlan et al.~\cite{SPMacLachlan_TAManteuffel_SFMcCormick_2006a}, which provides a rigorous bound on the two-level AMG convergence factor depending on properties of the grid partitioning. Define a binary variable indicating if a node is fine or coarse:
\begin{equation}
  \label{eq:f_i}
  f_i = \begin{cases}
    1 & \text{if } i \in F, \\
    0 & \text{if } i \in C,
  \end{cases}
\end{equation}
where $F$ and $C$ denote the sets of fine and coarse nodes, respectively. The key element in the theory of~\cite{SPMacLachlan_TAManteuffel_SFMcCormick_2006a} is suitable diagonal dominance of the submatrix of $A$ restricted to the $F$ set.  This is encapsulated by the following optimization problem, which defines the largest $F$ set that satisfies the dominance criterion:
\begin{subequations}
  \label{eq:opt}
  \begin{align}
    \label{eq:optim_problem}
    &\text{maximize} \sum_{i=1}^{n}f_{i} \\
    \label{eq:constraint}
    &\text{subject to } |a_{ii}| \ge \theta\underset{j\in F}{\sum}|a_{ij}| \text{ for all $i$ such that $f_i = 1$.}
  \end{align}
\end{subequations}
Here, $\theta$ is a dominance parameter that ties directly to the convergence theory in~\cite{SPMacLachlan_TAManteuffel_SFMcCormick_2006a} (we use $\theta = 0.56$).  Thus, solving \eqref{eq:opt} offers the lowest (two-grid) complexity to achieve a solver with a certain guaranteed convergence factor per iteration.  In~\cite{maclachlan2007greedy}, MacLachlan and Saad proved that this optimization problem is NP-hard and proposed a greedy algorithm for approximating its solution.  In~\cite{Zaman_2021a}, it is shown that the greedy algorithm is ineffective in solving \eqref{eq:opt} in some cases, and a simulated annealing approach (SA) is used instead.  While that approach is effective, it has a high cost, due to the inefficiency of SA. Here, we aim to apply tools from ML to retain the effectiveness of the SA approach, but at a lower cost. Our proposed method is linear in problem size, while SA is less efficient in comparison.

\section{Reinforcement learning for multigrid coarsening}
\label{sec:reinf-learn-mult}

While grid coarsening could be thought of as a function that maps a fine grid to a coarse grid, it is natural to formulate it as an iterative process that selects coarse nodes one by one, so that each decision can take into account the locations of previously selected coarse nodes. We thus start by reformulating the optimization problem~(\ref{eq:opt}) as a reinforcement learning problem.

\subsection{Environment}
\label{sec:environment}

To define the reinforcement learning environment, we start from the symmetric sparse linear system (\ref{eq:Axb}) with an $n \times n$ matrix $A$. We define a graph $G = (V,E)$ that has a node $i \in V$ for each variable index $i = 1,\ldots,n$ and an edge $(i,j) \in E$ if $A_{ij} \ne 0$ for $i \ne j$. Consider two binary variables $f_i \in \{0,1\}$ and $v_i \in \{0,1\}$ for each node $i \in V$. Here $f_i$ indicates whether node $i$ is a fine node, as in~(\ref{eq:f_i}), and $v_i$ indicates whether node $i$ violates the diagonal dominance constraint~(\ref{eq:constraint}). Recall that $F = \{i \mid f_i = 1\}$ is the set of fine nodes and $C = \{i \mid f_i = 0\}$ is the set of coarse nodes. The {\it\bf state space} is now defined by $\mathcal{S} = \{(f_i,v_i) \mid i = 1,\ldots,n\}$ which is $2n$ binary variables, two for each node. The {\it\bf initial state} $s_0$ consists of all fine nodes, so $f_i = 1$ for all $i$, and $v_i$ is determined by:
\begin{equation}
  \label{eq:v_i}
  v_i = \begin{cases}
    1 & \text{if } f_i = 1 \text{ and } |a_{ii}| < \theta \sum_{j \in F}|a_{ij}|, \\
    0 & \text{otherwise.}
  \end{cases}
\end{equation}
At each time step the agent chooses one violating fine node to convert into a coarse node. The {\it\bf action space} is thus $\mathcal{A} = \{i \mid v_i = 1\}$. Given an action $a$ chosen in state $s$, the {\it\bf next state} $s'$ changes $f_a' = 0$, so node $a$ becomes coarse, and $v_i'$ is recomputed for all nodes by~(\ref{eq:v_i}). Note that $v_i' = v_i$ for all nodes that are not adjacent to the new coarse node $a$. The {\it\bf reward} $r(s) = -|C|$ is the negative of the number of coarse nodes and the environment {\it\bf terminates} when there are no more actions to take (that is, when no nodes are violating).

\begin{theorem}
  \label{thm:env}
  An optimal agent for the environment described above will exactly solve the optimization problem~(\ref{eq:opt}).
\end{theorem}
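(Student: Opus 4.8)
The plan is to show that the terminal states of this MDP coincide with the feasible points of \eqref{eq:opt}, and that the agent's return is a strictly decreasing function of the number of coarse nodes $|C|$; a reward-maximizing agent therefore halts at a feasible coarsening with the smallest $|C|$, equivalently the largest $\sum_i f_i$, which is exactly an optimizer of \eqref{eq:opt}. I would break this into three steps: (i) \emph{soundness}, every terminal state is feasible; (ii) \emph{reward alignment}, the return encodes the objective \eqref{eq:optim_problem}; and (iii) \emph{completeness}, some optimizer of \eqref{eq:opt} is reachable by the greedy violating-node dynamics. I expect step (iii) to be the main obstacle.

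Steps (i) and (ii) are the routine part. For soundness, the environment terminates exactly when $v_i = 0$ for all $i$; by \eqref{eq:v_i} this means every fine node satisfies $|a_{ii}| \ge \theta\sum_{j\in F}|a_{ij}|$, which is precisely the constraint \eqref{eq:constraint}, so every terminal state is a feasible point and the agent can only ever output feasible coarsenings. For reward alignment, each action turns exactly one fine node coarse, so along any trajectory $s_0 \to s_1 \to \cdots \to s_m$ we have $|C(s_k)| = k$, and the accumulated return is a fixed decreasing function of the terminal count $m = |C|$ (namely $-\sum_{k=1}^{m} k$ for per-step rewards, or simply $-m$ if the reward is collected only at termination). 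Maximizing the return is thus equivalent to minimizing $|C|$, i.e.\ maximizing $\sum_i f_i$; hence an optimal agent returns, among all \emph{reachable} terminal states, one of minimum $|C|$.

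The crux is step (iii): I must show that the minimum of $|C|$ over reachable terminal states equals the minimum of $|C|$ over all feasible points of \eqref{eq:opt}. The tool I would use is \emph{monotonicity} of the violation predicate. Because the right-hand side $\theta\sum_{j\in F}|a_{ij}|$ can only decrease as $F$ shrinks, converting a node to coarse never creates a new violation—it can only clear violations at that node's neighbors. This makes the dynamics well-defined and guarantees termination in at most $n$ steps. The same monotonicity shows that any node violating with respect to the current fine set was already violating in the initial all-fine state, so every reachable coarse set is contained in the initial violating set $\{i : v_i = 1 \text{ in } s_0\}$.

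What remains, and where the real difficulty lies, is to certify that some globally optimal $F$ set is attainable under this reachability restriction, together with a valid ordering in which each chosen node is violating at the moment it is selected. I would attempt an exchange argument: beginning from an optimal partition $(F^*, C^*)$, I would repeatedly reorder and, where necessary, swap a coarse node that is not violating with respect to the current partial fine set for a violating neighbor, invoking monotonicity to preserve both feasibility and the size $|C^*|$ at each swap, until the coarse set is realized in an order consistent with the greedy policy. Verifying that such swaps always exist and never increase $|C|$ is the heart of the matter; I anticipate that this is exactly where the structural hypotheses of the paper—planar 2D Poisson discretizations with uniform diagonal dominance and bounded degree—must enter, since for a general symmetric $A$ an optimal $F$ set can require coarsening a node that is never violating, in which case the reachable optimum would be strictly worse and the correspondence would break.
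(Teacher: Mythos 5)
Your steps (i) and (ii) are, in their entirety, the paper's own proof: it observes that each time step adds exactly one coarse node (so the return is the fixed decreasing function $-n_c(n_c+1)/2$ of the terminal coarse count $n_c$), that the environment terminates exactly when \eqref{eq:constraint} holds, and then concludes that an optimal agent ``will minimize $n_c$ subject to \eqref{eq:constraint},'' which it identifies with \eqref{eq:optim_problem}. The paper never addresses your step (iii): it silently equates the minimum of $n_c$ over \emph{reachable} terminal states with the minimum over \emph{all} feasible partitions, which is precisely the identification you flagged as the crux. So the obstacle you isolated is not a deficiency of your attempt relative to the paper; it is a gap in the paper's argument that your decomposition makes explicit.

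Moreover, your closing suspicion is correct: step (iii) genuinely fails for general symmetric $A$, so no exchange argument can close it without extra hypotheses. Concretely, take a star graph with center $0$ and leaves $1,\dots,k$, with $a_{00}=2k$, $a_{ii}=1$ and $a_{0i}=a_{i0}=1$ for each leaf, no leaf--leaf edges, and $\theta=0.56$; note this $A$ is symmetric, positive definite, and weakly diagonally dominant. Under the paper's convention that the sum in \eqref{eq:constraint} runs over all $j\in F$ including $j=i$ (the convention used in the proof of Theorem~\ref{thm:struct_bound}), initially every leaf violates, since $1 < \theta(1+1)=1.12$, while the center does not, since $2k \ge \theta(2k+k)=1.68k$. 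Coarsening a leaf leaves every other leaf's sum unchanged (leaves are pairwise non-adjacent) and only shrinks the center's sum, so along every trajectory the center never enters the action space and every leaf must eventually be coarsened: the unique reachable terminal state has $n_c=k$. Yet $C=\{0\}$ is feasible for \eqref{eq:opt}, since each leaf's fine sum is then $|a_{ii}|=1\ge\theta\cdot 1$, giving $n_c=1$. For $k\ge 2$ the optimal agent therefore does not solve \eqref{eq:opt}. The theorem as stated needs additional structural assumptions on $A$ (ruling out configurations like this one, which do not arise in the paper's 2D Poisson discretizations) that neither the paper's proof nor any purely combinatorial exchange argument supplies; what is provable in general is exactly the soundness and reward-alignment content of your steps (i) and (ii).
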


\begin{proof}
  Because each time step of the environment adds exactly one new coarse node, the total payoff function is $\rho = -n_c (n_c + 1)/2$, where $n_c$ is the number of coarse nodes at termination. The environment terminates at the first time that the constraint~(\ref{eq:constraint}) is satisfied and thus an optimal agent will minimize $n_c$ subject to~(\ref{eq:constraint}), therefore minimizing the number of steps. Equivalently, the number of fine nodes will be maximized, which is~(\ref{eq:optim_problem}).
\end{proof}

\subsection{Agent and training}
\label{sec:agent-training}

We want to learn an agent that is able to coarsen graphs for many different $A x = b$ systems, which will be of different sizes and different graph topologies. We thus use a graph neural network agent that can adapt to arbitrary graph structures and a dueling architecture \cite{wang2016dueling} to enable graph-size-invariant advantage estimation. Specifically, we use 3 TAGConv \cite{du2017topology} layers for the agent, each consisting of 4-size filters and 32 hidden units. This means that the agent output at each node depends on information from nodes up to 12 hops away on the graph (3 layers $\times$ 4-size). We evaluate the agent on the graph $G$ associated with the matrix $A$ from~(\ref{eq:Axb}), using $f_i$ and $v_i$ as node features and the elements of the $A$ matrix as edge features. TAGConv networks perform graph convolutions in the vertex domain and learn filters that are adaptive to the topology of the graph. They inherit the convolution properties of CNNs on rectangular grids but perform graph convolutions in the sense of graph signal processing.

The agent has two separate output heads in a dueling architecture \cite{wang2016dueling}, one to estimate the state value $V(s)$ and the other to estimate the state-action advantage $A(s,a)$. (In this paragraph, we temporarily overload the notation $A$ to refer to the advantage function rather than the matrix from~(\ref{eq:Axb}), and $V$ to refer to the state value and not the vertex set of $G$.) These two output heads each consist of one TAGConv layer and the lower two TAGConv layers are shared, with a final averaging layer for the $V$ head. The state-action value function $Q(s,a)$ is, thus, the sum of the outputs from the two heads. While dueling architectures have benefits for training, they are also valuable for graph problems where the $Q$ and $V$ values are dependent on the size of the graph but $A$ is not, as is the case for multigrid coarsening. This size dependence arises because $Q$ and $V$ are estimating the number of coarse nodes needed for the graph, which scales with the graph size. In contrast, the advantage $A$ is only estimating the differential impact of coarsening a particular node, which will be a small number that depends only on the local graph structure and does not increase with the global graph size. Using a dueling architecture, we can train on small graphs, where we will learn $V$ and $A$ accurately for a certain size of graph, and then evaluate the agent on large graphs where $V$ will no longer be accurate but $A$ will continue to be correct. Since we only need the output of $A$ for evaluation, this provides a scale-invariant solution to the reinforcement learning problem.

To train the agent, we use Dueling Double DQN \cite{van2016deep, wang2016dueling}. Each episode consists of a different random convex triangular mesh in 2D with approximately 30 to 120 nodes. Meshes are generated by selecting a number of uniformly random points in 2D, taking their convex hull, and meshing the interior using pygmsh~\cite{Schlmer_pygmsh_A_Python} (see Figure~\ref{fig:train_ex} as an example). On each mesh, we compute the piecewise linear finite-element discretization of the Laplacian, $A$, corresponding to the PDE~(\ref{eq:Poisson}), and perform a rollout of the environment in Section~\ref{sec:environment}. All training and testing problems used a dominance factor of $\theta = 0.56$. We train for 10k episodes of DDQN with learning rate of $10^{-3}$, replay buffer of size $5\times10^{3}$, and batch size of 32. At the beginning of the training, the exploration $\epsilon$ value in the DDQN algorithm is set to 1 and decayed at rate of $1.25\times10^{-4}$ to a minimum value of $10^{-2}$. The target network weights are updated after each 4 episodes, and the frozen network is replaced by the target network after every 10 learning steps.  See Appendix~B for a discussion of the choices made for the network and its architecture.

\begin{figure}[h]
  \centering
  \includegraphics[width = 1\textwidth]{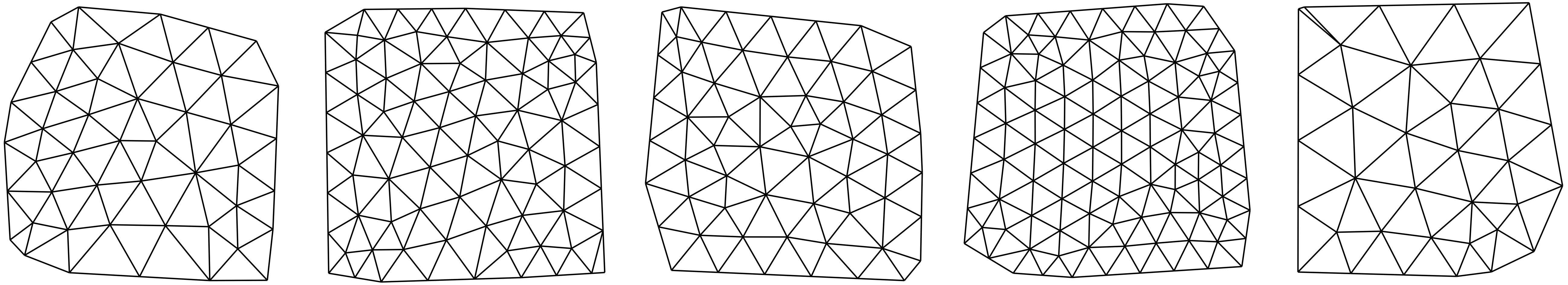}

  \caption{Examples training grids with sizes ranging from approximately 30 to 120 nodes.}\label{fig:train_ex}
\end{figure}

\subsection{Evaluation}\label{sec:evaluation}

To coarsen a graph, we can simply apply the trained agent to the graph, coarsen the fine node with the highest advantage output, and repeat. However, a direct implementation of this will incur a cost of $O(n^2)$ for a graph with $n$ nodes, because each step to coarsen a node requires an $O(n)$ cost to evaluate the TAGCN network and there are $O(n)$ coarse nodes required.  Thus, we follow~\cite{Zaman_2021a} and use a localization strategy to reduce cost.

To perform graph coarsening with $O(n)$ total cost, we combine graph decomposition via Lloyd aggregation \cite{bell2008algebraic, lloyd1982least} with the trained agent.  See Appendix~C for a more detailed discussion of Lloyd aggregation. We first use the Lloyd aggregation algorithm to decompose the node set into a disjoint union of subdomains. Then, at each step, we evaluate the advantage TAGCN network on the entire graph and coarsen one node per subdomain (if there are any available). By decomposing into $O(n)$ subdomains of bounded size, this method terminates in $O(1)$ steps and results in an $O(n)$ cost for the entire coarsening procedure. Algorithm~\ref{alg:eval} details this method, Theorem~\ref{thm:time_complexity} proves that it has $O(n)$ cost, and Figure~\ref{fig:structured_grid_and_time_complexity} (right) shows numerical verification of the linear cost scaling. Furthermore, Theorem~\ref{thm:convergence} uses the existing AMG theoretical results \cite{maclachlan2007greedy} to prove that Algorithm~\ref{alg:eval} necessarily results in a convergent multigrid method.

\begin{algorithm}[t]
\caption{Evaluation Algorithm}\label{alg:eval}
\begin{algorithmic}[1]
\State Use Lloyd aggregation to decompose the node set into subdomains $\{V_1, V_2, ..., V_K\}$
\While {Constraint~(\ref{eq:constraint}) is not satisfied}
\State Evaluate the advantage TAGCN network to obtain the advantage $A_i$ for each node
\For {$k = 1$ to $K$ such that $V_k$ contains at least one node with $v_i = 1$}
\State $\displaystyle i = \operatorname*{argmax}_{j \in V_k, v_j = 1} A_j$
\State Coarsen node $i$
\EndFor
\EndWhile
\end{algorithmic}
\end{algorithm}

\begin{theorem}
\label{thm:time_complexity}
For grids with $n$ nodes having $n$-independent bounded node degree and $n$-independent bounded subdomain size when using Lloyd aggregation with a fixed number of cycles, the time complexity of Algorithm~\ref{alg:eval} is $O(n)$.
\end{theorem}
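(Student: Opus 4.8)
The plan is to split the running time of Algorithm~\ref{alg:eval} into three pieces: the one-time Lloyd aggregation in Line~1, the cost of a single pass through the \textbf{while} loop body, and the total number of \textbf{while}-loop iterations. For the Lloyd step, each cycle propagates labels outward from the current cluster centers, visiting every node and its incident edges a constant number of times; since the node degree is bounded independently of $n$, one cycle costs $O(n)$, and a fixed number of cycles keeps the whole aggregation at $O(n)$. This step partitions $V$ into $K$ subdomains, and because each subdomain has $n$-independent bounded size, say at most $B$, we have $n/B \le K \le n$, so $K = O(n)$ and, more usefully, $\sum_{k=1}^{K}|V_k| = n$.

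Next I would bound the cost of one iteration. Evaluating the advantage network on the whole graph is $O(n)$: bounded node degree yields $O(n)$ edges, and the TAGCN has a fixed number of layers with fixed filter size and hidden width, so each layer applies a constant-degree polynomial of the sparse adjacency to the node features at cost $O(n)$. The inner \textbf{for} loop then scans each subdomain once to take an argmax over its violating nodes, which costs $\sum_{k}|V_k| = n = O(n)$ in total. Hence a single iteration of the \textbf{while} loop is $O(n)$.

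The crux, and the step I expect to be the main obstacle, is showing that the \textbf{while} loop executes only $O(1)$ times; the key idea is a monotonicity property of the violation set. When a node $a$ is coarsened it leaves $F$, so for every node $i$ the quantity $\sum_{j\in F}|a_{ij}|$ appearing in~(\ref{eq:constraint}) can only decrease, the constraint becomes no harder to satisfy at every node, any fine node already satisfying it continues to do so, and $a$ itself has $v_a = 0$ by~(\ref{eq:v_i}). Therefore no coarsening step ever creates a new violating node, and the set of violating nodes is monotonically non-increasing across iterations. I would use this to argue that, for each subdomain $V_k$, the iterations in which $V_k$ contains a violating node form a contiguous prefix $\{1,\dots,T_k\}$, since once $V_k$ is violation-free it remains so. In each such iteration the \textbf{for} loop coarsens exactly one node of $V_k$, so $T_k$ equals the number of nodes coarsened in $V_k$, giving $T_k \le |V_k| \le B$. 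The loop then terminates at iteration $\max_k T_k \le B$, which is $O(1)$ under the bounded-subdomain hypothesis. Combining the three estimates yields a total cost of $O(n) + O(1)\cdot O(n) = O(n)$, as claimed.
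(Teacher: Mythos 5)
Your proposal is correct and uses the same three-part decomposition as the paper's proof: $O(n)$ for the Lloyd aggregation, $O(n)$ per while-iteration (sparse TAGCN evaluation plus the subdomain scan), and an $n$-independent bound on the number of while-iterations. Where you differ is on the last and most delicate step. The paper simply asserts that bounded Lloyd subdomain size implies an $n$-independent number of while-loop iterations; you actually prove it, via the monotonicity lemma that coarsening a node can only shrink the right-hand side $\theta\sum_{j\in F}|a_{ij}|$ of~(\ref{eq:constraint}), so no coarsening step ever creates a new violating node, the iterations in which a subdomain $V_k$ contains a violation form a prefix, and each such iteration coarsens a distinct node of $V_k$, giving at most $B$ iterations overall. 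This lemma is not decorative: without it, a subdomain could in principle become violating again after going quiet, the active iterations of the $K$ subdomains could be staggered in time, and the only available bound on the iteration count would be $O(n)$ (hence $O(n^2)$ total cost) --- so your argument supplies exactly the justification the paper's proof leaves implicit. One minor caveat: because coarsenings within a single while-iteration are applied sequentially, a violation in $V_k$ present at the start of an iteration can be resolved by a coarsening in a neighboring subdomain before the for-loop reaches $k$, in which case no node of $V_k$ is coarsened in that iteration; this costs you at most one extra iteration per subdomain ($T_k \le B + 1$ rather than $T_k = $ the number of nodes coarsened in $V_k$) and does not affect the $O(1)$ conclusion.
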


\begin{proof}
  Since the Lloyd subdomain size is bounded and the number of Lloyd aggregation cycles are fixed, generating the subdomain decomposition has $O(n)$ time complexity \cite{bell2008algebraic}. Evaluating each TAGCN layer consists of $y = \sum_{\ell=1}^{L}G_{\ell}x_{\ell} + b\mathbf{1}_{n}$, where $L$ is the number of node features, $b$ is a learnable bias, $G_{\ell} \in  \mathbb{R}^{n\times n}$ is the graph filter, and $x_{\ell}\in \mathbb{R}^{n}$ are the node features. Moreover, the graph filter is a polynomial in the adjacency matrix $M$ of the graph: $G_{\ell} = \sum_{j=0}^{J}g_{\ell,j}M^{j}$ where $J$ is a constant and $g_{\ell,j}$ are the filter polynomial coefficients. The bounded node degree of the graph implies that $M$ is sparse and the action of $M^j$ has $O(n)$ cost, and thus the full TAGCN evaluation also has $O(n)$ cost. Because the Lloyd aggregate size is bounded, we have $K = O(n)$ and so the for-loop in Algorithm~\ref{alg:eval} has $O(n)$ cost. Finally, the bounded Lloyd subdomain size also implies that the number of while-loop iterations in Algorithm~\ref{alg:eval} is independent of $n$, and thus the total cost is $O(n)$.
\end{proof}   

\begin{theorem}
\label{thm:convergence}
Algorithm~\ref{alg:eval} is guaranteed to terminate and satisfy constraint~\eqref{eq:constraint}. Moreover, if matrix $A$ in linear system~\eqref{eq:Axb} is diagonally dominant, the resulting two-grid multigrid cycle (Algorithm~\ref{alg:2level}) is guaranteed to converge with a convergence factor bounded independently of $n$.
\end{theorem}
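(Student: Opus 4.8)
\section*{Proof proposal}

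The plan is to treat the two assertions separately: first the unconditional termination together with satisfaction of constraint~\eqref{eq:constraint}, then the conditional convergence-factor bound under diagonal dominance of $A$. The first part is a combinatorial monotonicity argument about Algorithm~\ref{alg:eval}, while the second reduces to invoking the two-level convergence theory of \cite{SPMacLachlan_TAManteuffel_SFMcCormick_2006a,maclachlan2007greedy}, whose central hypothesis is exactly the diagonal-dominance constraint~\eqref{eq:constraint} that the algorithm enforces upon termination.

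For termination, I would argue as follows. Coarsening is a one-way operation: once a node moves from $F$ to $C$ its status never reverts, so $\lvert F\rvert$ is strictly decreasing across the algorithm. Whenever the while-loop guard fails, constraint~\eqref{eq:constraint} is violated, so there is at least one node $i$ with $v_i=1$; that node lies in some subdomain $V_k$, and the for-loop therefore coarsens at least one node on that pass. Hence every non-terminating while-loop iteration strictly decreases $\lvert F\rvert$, and since $\lvert F\rvert$ is a nonnegative integer bounded by $n$, only finitely many such iterations can occur. In the extreme case $F=\emptyset$, constraint~\eqref{eq:constraint} holds vacuously, so the loop must exit; by the loop guard the returned partition satisfies~\eqref{eq:constraint}.

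For the convergence bound, I would first observe that, by the preceding part, Algorithm~\ref{alg:eval} returns an $F$/$C$ splitting for which the restricted submatrix $A_{FF}$ is $\theta$-diagonally dominant in the sense of~\eqref{eq:constraint}. The theory of \cite{SPMacLachlan_TAManteuffel_SFMcCormick_2006a}, as specialized in \cite{maclachlan2007greedy}, shows that for a symmetric, diagonally dominant (hence positive semidefinite) $A$, together with the interpolation operator $P$ prescribed by that framework, the two-grid cycle of Algorithm~\ref{alg:2level} has a convergence factor bounded by an explicit function of $\theta$ alone. The crucial feature is that~\eqref{eq:constraint} is a \emph{pointwise}, local condition, so the smoothing and weak-approximation constants entering the convergence estimate depend only on $\theta$ and on the (bounded) node degree, and in particular are independent of the grid size $n$. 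I would assemble this part by (i) confirming that diagonal dominance of $A$ places us within the hypotheses of the cited theory, (ii) quoting the precise convergence-factor bound as a function of $\theta$, and (iii) noting its $n$-independence.

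The main obstacle, requiring the most care, is steps (ii)--(iii): faithfully transcribing the external convergence theorem and verifying that its hypotheses line up with our setting. In particular one must pin down which interpolation operator $P$ the bound refers to---the guarantee is for the $P$ built in the MacLachlan--Saad framework, not for an arbitrary input to Algorithm~\ref{alg:2level}---and confirm that the global diagonal dominance of $A$ is what the theory needs to control the smoother, whereas the local constraint~\eqref{eq:constraint} is what controls the coarse-space approximation. Once these hypotheses are matched, the $n$-independence of the factor is immediate from the local nature of~\eqref{eq:constraint}; the real risk is a subtle mismatch between the assumptions of \cite{SPMacLachlan_TAManteuffel_SFMcCormick_2006a} and the precise construction used here.
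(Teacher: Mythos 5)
Your proposal is correct and follows essentially the same route as the paper's proof: termination follows because every non-terminating while-loop pass coarsens at least one node of a finite graph, the loop guard forces constraint~\eqref{eq:constraint} at exit, and the convergence-factor bound is obtained by invoking the MacLachlan--Saad two-level theory (Theorems~3 and~5 of \cite{maclachlan2007greedy}), whose hypotheses are exactly diagonal dominance of $A$ plus~\eqref{eq:constraint}. The only slip is a wording inversion (``whenever the while-loop guard fails, constraint~\eqref{eq:constraint} is violated'' should read ``whenever the guard \emph{holds}, i.e., the loop continues''), which does not affect the argument.
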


\begin{proof}
  Since every iteration of Algorithm~\ref{alg:eval} coarsens at least one node, the algorithm must terminate since the graph is finite. Moreover, as long as there are fine nodes violating constraint~\eqref{eq:constraint}, the algorithm does not terminate, and so constraint~\eqref{eq:constraint} must be satisfied at termination. Finally, from Theorems 3 and 5 in MacLachlan and Saad \cite{maclachlan2007greedy}, diagonal dominance of $A$ and~\eqref{eq:constraint} are sufficient to guarantee uniform convergence of the two-grid cycle.
\end{proof}

\section{Numerical Experiments}
\label{sec:numer-exper}

To test the performance of the RL method for multigrid coarsening, we use six different families of test grids, as summarized in Table~\ref{tab:families}. Each of these families explores a different type of grid structure and has a parameter to vary this structural element. For all tests, we used the agent trained following Section~\ref{sec:agent-training} and evaluated using Algorithm~\ref{alg:eval} with mean Lloyd aggregate size of 400 nodes and 1000 Lloyd cycles. The code \footnote[1]{All code and data for this paper is at \url{https://github.com/compdyn/rl_grid_coarsen} (MIT licensed).} was implemented using PyTorch Geometric \cite{Fey_Lenssen_2019}, PyAMG \cite{OlSc2018}, and NetworkX \cite{hagberg2008exploring}. All computation was performed using CPU-only on an 8-core i9 MacBook Pro. The training time was approximately 5 hours and the average evaluation time was approximately 20 seconds per test grid. Example grid coarsenings are shown in Figure~\ref{fig:coarsening_example}.

\begin{table}
  \begin{tabular}{m{0.4\textwidth}m{0.3\textwidth}m{0.3\textwidth}}
\toprule
 Family  & Example & \hfill F-Fraction vs. Attribute \\
\midrule
        \textbf{Structured:}
        A family of 18 rectangular structured grids with different grid sizes.
        &\includegraphics[width=0.3\textwidth]{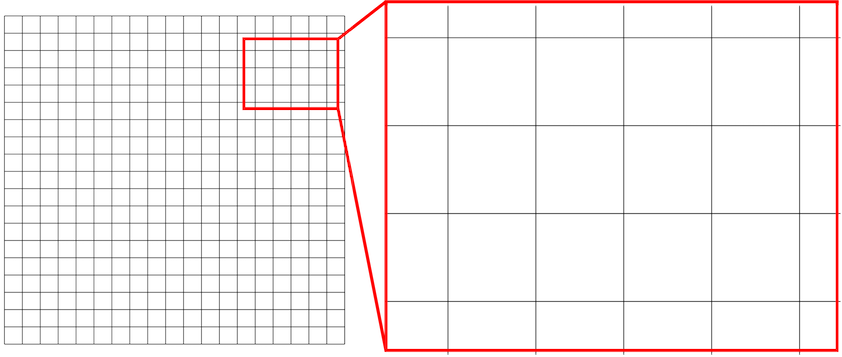}
        &\includegraphics[width=0.3\textwidth]{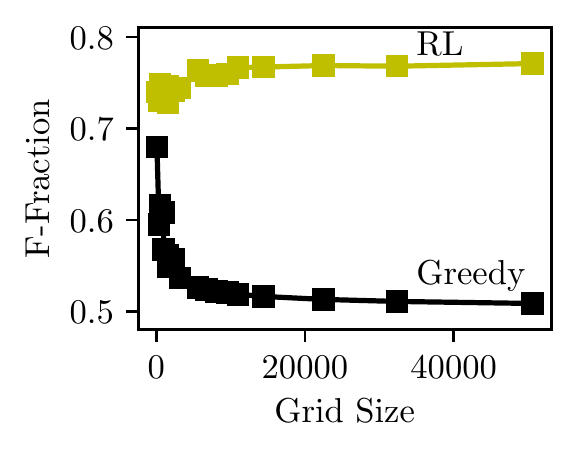}\\ 
\midrule
        \textbf{Graded Mesh:}
        A family of 12 unstructured grids with different convex shapes, and graded meshes, i.e., smoothly varying mesh size across the domain.   
        &\includegraphics[width=0.3\textwidth]{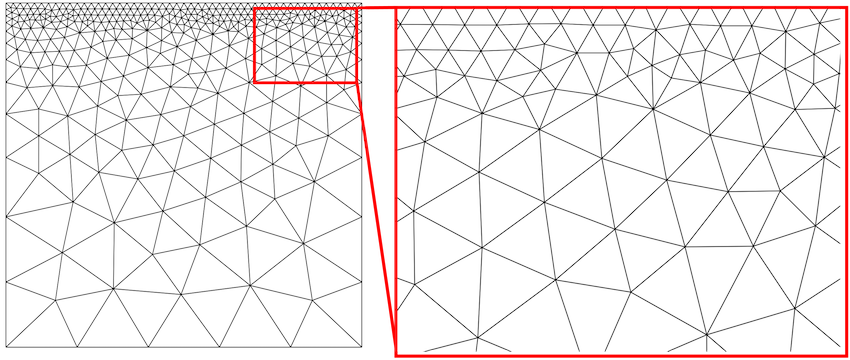}
        &\includegraphics[width=0.3\textwidth]{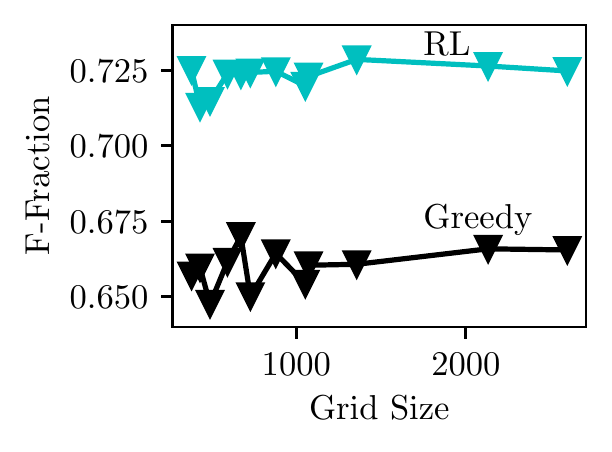}\\ 
\midrule
        \textbf{Wide Valence:}
        A family of 12 unstructured convex grids with the same size and different average node degree standard deviation.
        &\includegraphics[width=0.3\textwidth]{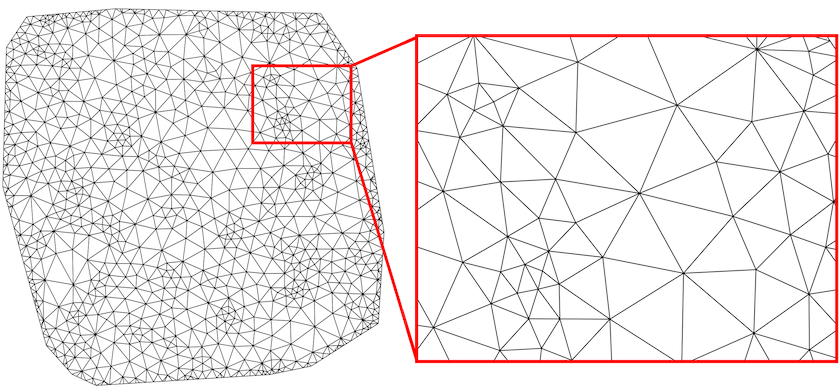}
        &\includegraphics[width=0.3\textwidth]{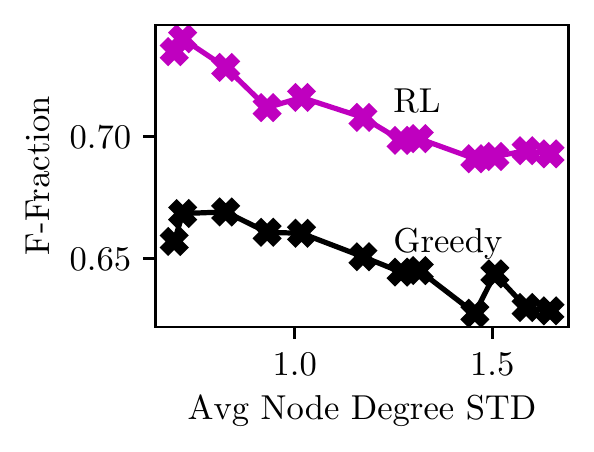}\\ 
\midrule
        \textbf{Different Size:}
        A family of 42 unstructured convex grids with grid size ranging from about 60 to 52\,000 nodes. The average node degree standard deviation and mesh aspect ratio is constant.
        &\includegraphics[width=0.3\textwidth]{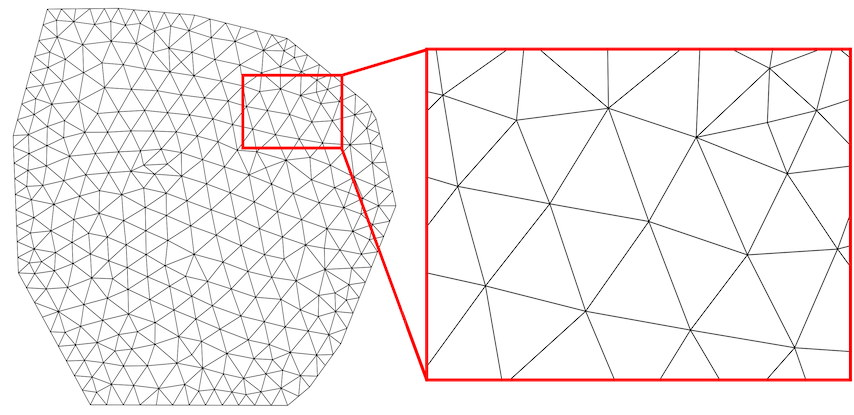}
        &\includegraphics[width=0.3\textwidth]{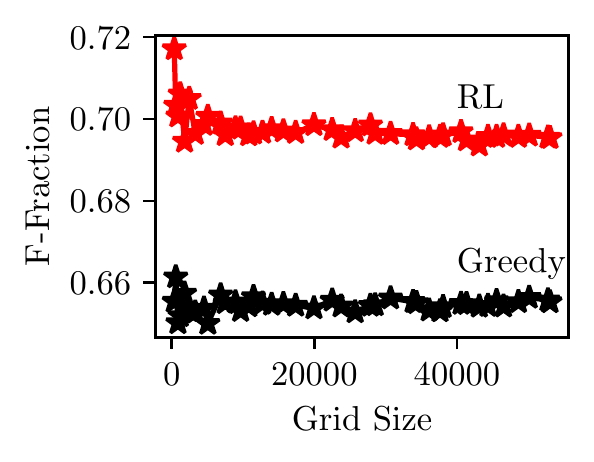}\\ 
\midrule
        \textbf{Aspect Ratio:}
        A family of 12 unstructured convex grids with the same size and different average mesh aspect ratio, varying from about 2 to 180. 
        &\includegraphics[width=0.3\textwidth]{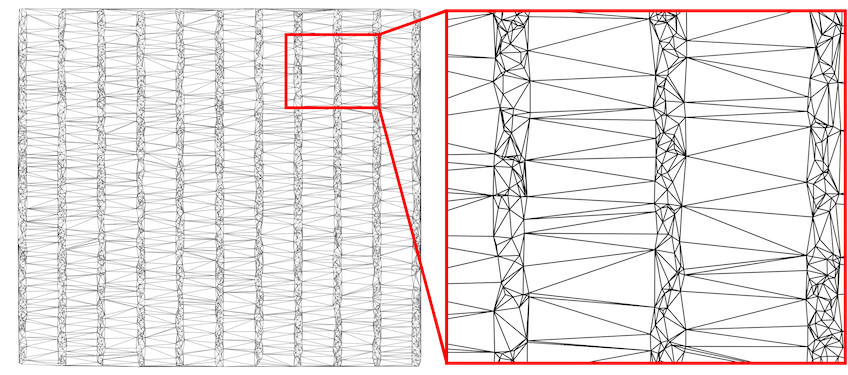}
        &\includegraphics[width=0.3\textwidth]{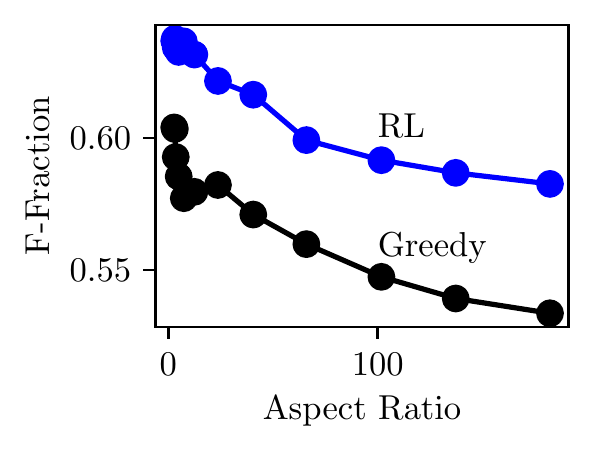}\\ 
\midrule
        \textbf{Non-Convex:}
        A family of 12 unstructured non-convex grids. The grids are generated by removing geometrical shapes from a main geometry and meshing the remainder.
        &\includegraphics[width=0.3\textwidth]{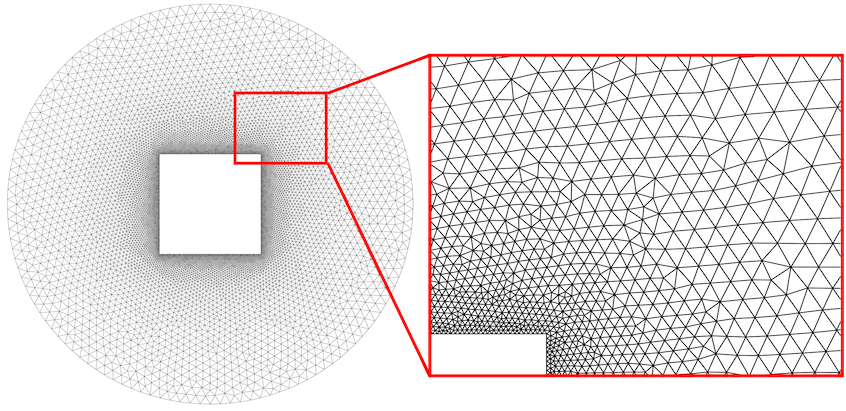}
        &\includegraphics[width=0.3\textwidth]{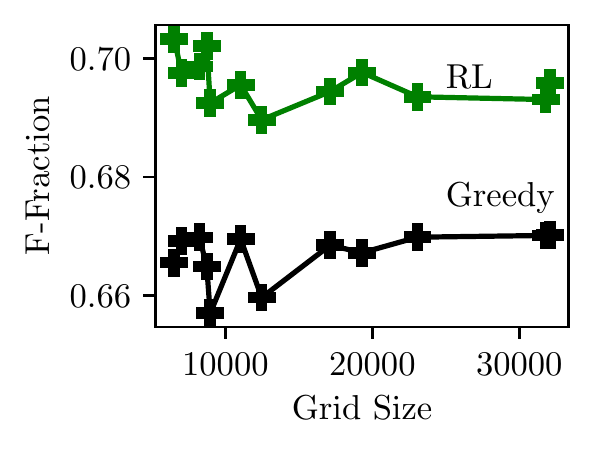}\\ 
\toprule
\end{tabular}
\caption{Families of test grids used for numerical experiments, showing F-fractions (higher is better).\label{tab:families}}
\end{table}

\begin{figure}
  \centering
  \includegraphics[width = 0.225\textwidth]{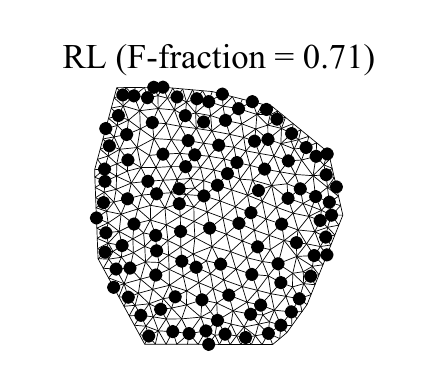}
  \includegraphics[width = 0.255\textwidth]{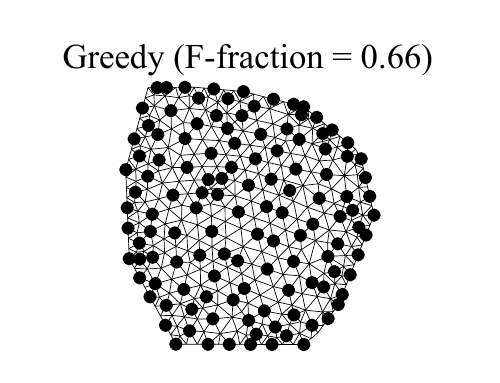}
  \includegraphics[width = 0.225\textwidth]{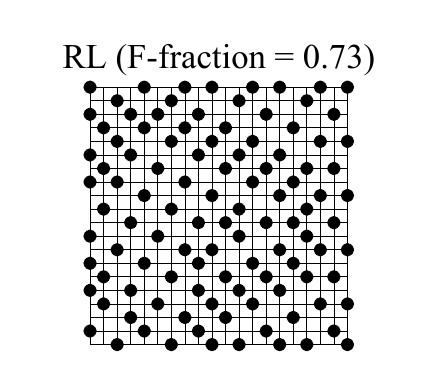}
  \includegraphics[width = 0.25875\textwidth]{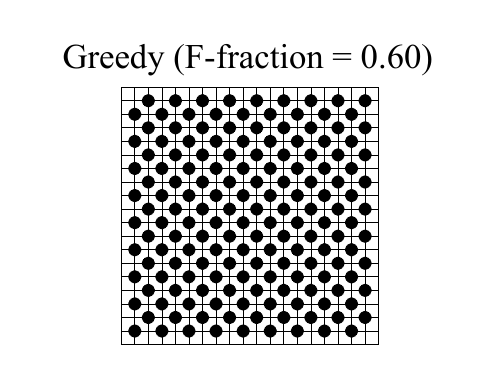}
  \caption{Example coarsenings of meshes from the ``Different Size'' (left) and ``Structured'' (right) families of test grids, comparing RL and greedy coarsening algorithms for dominance factor $\theta = 0.56$.}
  \label{fig:coarsening_example}
\end{figure}

\paragraph{Evaluation metrics: F-fraction and effective convergence factor.}
The primary metric for coarse grid quality that we are optimizing is the number of fine nodes, which should be maximized as in~(\ref{eq:optim_problem}). To quantify this, we use the F-fraction (higher is better), which is the ratio of the number of fine nodes to the total number of nodes. As a secondary metric, we use the effective convergence factor (lower is better) of the two-level multigrid method formed from the coarse grid using the reduction-based AMG interpolation operator from \cite{SPMacLachlan_TAManteuffel_SFMcCormick_2006a, maclachlan2007greedy}. The effective convergence factor is defined as the per-cycle AMG convergence factor raised to the power of one over the AMG grid complexity, which is a measure of the convergence factor per unit of computational work.

\paragraph{Comparison methods: Greedy and theoretical bounds.}
To assess the performance of our RL coarsening method, we compare with the simple greedy method of MacLachlan and Saad \cite{maclachlan2007greedy}, which progressively coarsens the grid by greedily selecting the node with the lowest value of $|a_{ii}|/\sum_{j\in F}|a_{ij}|$ at each step (see~(\ref{eq:constraint}) for reference). While the greedy method provides a good baseline, we can also derive a theoretical upper-bound for the structured grid family, as described below in Theorem~\ref{thm:struct_bound}.

\begin{theorem}
  \label{thm:struct_bound}
  For the Poisson problem~(\ref{eq:Poisson}) discretized with a 5-point finite difference stencil on a structured grid of size $n_{x}\times n_{y}$, the constraint~(\ref{eq:constraint}) implies that the F-fraction, $f$, is bounded by:
  \begin{equation}
    f \le 0.8+\frac{2}{n_{x}}+\frac{2}{n_{y}}-\frac{4}{n_{x}n_{y}}.
    \label{eq:theoretical_bound}
  \end{equation}
\end{theorem}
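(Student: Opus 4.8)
The plan is to translate the algebraic constraint~\eqref{eq:constraint} into a purely combinatorial condition on the grid graph and then bound the number of fine nodes by a covering argument. The key observation is that, for the 5-point stencil, \eqref{eq:constraint} limits how many fine neighbours a fine node may have, and for $\theta = 0.56$ this forces every interior fine node to be adjacent to at least one coarse node. The coarse set must therefore dominate the interior of the grid, and since each coarse node dominates only a bounded number of nodes, the coarse set cannot be too small; equivalently, the fine set cannot be too large. Since this must hold for \emph{every} feasible splitting, it gives an upper bound on the F-fraction over all solutions of~\eqref{eq:opt}.

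First I would write down the matrix entries. For the 5-point finite-difference Laplacian every row has $a_{ii} = 4$ and $a_{ij} = -1$ for each horizontal or vertical grid neighbour, so $|a_{ij}| = 1$; an interior node has $4$ neighbours, an edge node $3$, and a corner node $2$. I would then evaluate~\eqref{eq:constraint} at a fine node $i$, noting that since $f_i = 1$ the index $i$ itself lies in $F$ and contributes $|a_{ii}| = 4$ to the sum. If a fine interior node has $p$ fine neighbours, the constraint reads $4 \ge 0.56\,(4 + p)$, i.e.\ $p \le 4/0.56 - 4 \approx 3.14$, hence $p \le 3$. Thus an interior fine node has at most three fine neighbours, equivalently at least one coarse neighbour. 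The same threshold $p \le 3$ is satisfied automatically by an edge node ($\le 3$ neighbours) and a corner node ($\le 2$ neighbours), so boundary nodes are never constrained.

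Next I would convert this into a bound. Let $N = n_x n_y$ and let $N_{\mathrm{int}} = (n_x - 2)(n_y - 2)$ be the number of interior nodes, with $C$ and $F$ the coarse and fine sets. By the previous step every interior node is either coarse or adjacent to a coarse node, so $C$ dominates the interior. Each coarse node lies in the closed neighbourhood of at most five nodes (itself together with its at most four neighbours), so it can dominate at most five interior nodes; hence $|C| \ge N_{\mathrm{int}}/5$. Therefore
\[
  |F| = N - |C| \le n_x n_y - \frac{(n_x-2)(n_y-2)}{5},
\]
and dividing by $N$ and expanding $\tfrac{1}{5}\bigl(1 - \tfrac{2}{n_x}\bigr)\bigl(1 - \tfrac{2}{n_y}\bigr)$ yields a bound of the form~\eqref{eq:theoretical_bound}; in fact the counting produces slightly smaller lower-order correction terms, which then imply the stated inequality.

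The main obstacle is the first reduction: one must evaluate the sum in~\eqref{eq:constraint} exactly, \emph{including} the diagonal contribution of the fine node itself, to see that $\theta = 0.56$ lands precisely in the window that permits three but forbids four fine neighbours. Once this domination condition is established the counting is routine, the only care being the boundary bookkeeping, since boundary nodes are unconstrained and should be credited as freely fine when assembling the $1/n_x$ and $1/n_y$ corrections. I would also remark that the bound is asymptotically tight: the standard diagonal "every fifth node" domination pattern of the grid graph attains coarse density approaching $1/5$, so $f \to 0.8$ is achievable as $n_x, n_y \to \infty$.
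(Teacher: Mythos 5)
Your proof is correct and takes essentially the same route as the paper's: the paper phrases the key step as ``every pentomino (the closed 5-point neighbourhood of an interior node) must contain a coarse node, and each coarse node lies in at most five pentominoes,'' which is exactly your domination double-count, leading to the same estimate $n_c \ge (n_x-2)(n_y-2)/5$ and the same final algebra. Your explicit check that $\theta = 0.56$ (with the diagonal term included in the sum over $F$) forces each interior fine node to have a coarse neighbour, and your observation that the counting actually yields a slightly tighter bound which then implies~\eqref{eq:theoretical_bound}, are details the paper's proof glosses over but are consistent with it.
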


\begin{proof}
  To satisfy constraint~\eqref{eq:constraint} for the 5-point stencil, every pentomino (a five cell structure obtained after removing corners of a $3 \times 3$ grid) within the grid must have at least one coarse node. Given an $n_x\times n_y$ rectangular grid, there are at least $(n_x-2)(n_y-2)$ pentominoes entirely lying within the grid, so there are at least this many coarse nodes. On the other hand, every coarse node appears in at most five different pentominoes. Letting $n_c$ be the number of coarse nodes, this implies $(n_x-2)(n_y-2) \le 5n_c$. Rearranging this inequality and using $f = 1 - n_c / (n_x n_y)$ gives~(\ref{eq:theoretical_bound}).
 \end{proof}

\paragraph{Structured grid coarsening results.}
The left panel in Figure~\ref{fig:structured_grid_and_time_complexity} shows the performance of the RL method compared to the greedy baseline and the theoretical upper bound. For large grids, the RL method achieves 96.4\% of the theoretical upper bound, which is 51.5\% higher than the greedy method. Most significantly, RL performs better on all grid sizes, with improving performance as the grid size increases. The right panel in Figure~\ref{fig:structured_grid_and_time_complexity} confirms that the time to coarsen is linear in the grid size, for structured grids with up to 1.2 million nodes.


\begin{figure}
  \centering
  \includegraphics[scale=0.9]{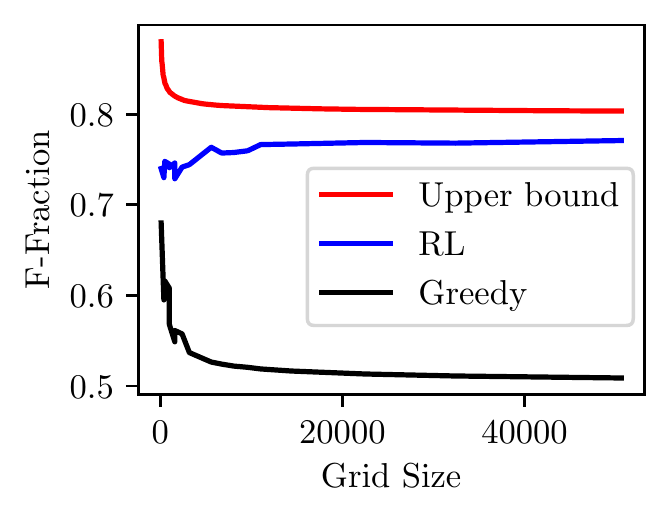}
  \hfill
  \includegraphics[scale=0.9]{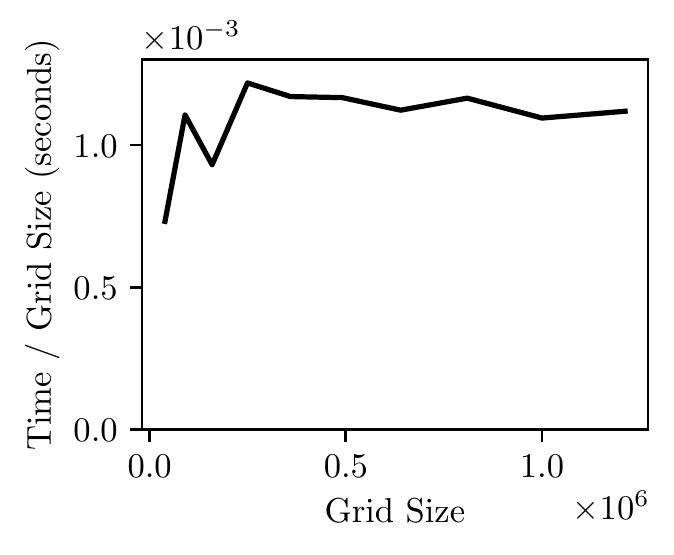}
  \caption{Left: F-fraction for the RL method and comparison methods (higher is better) for the Structured family of grids. Right: Evaluation time divided by grid size, showing linear scaling in grid size.}
  \label{fig:structured_grid_and_time_complexity}
\end{figure}

\paragraph{Unstructured grid coarsening results.}
For the unstructured grid families, the F-fractions obtained by the RL and greedy methods are shown in the right-most column in Table~\ref{tab:families}. Here, we see that the RL method always substantially outperforms the greedy method for every grid family, and for all parameter variations. These families include very diverse and challenging grids, including those with very wide valence (some nodes with high degree and others with low degree) and those with high aspect ratio elements (up to 180). This data is summarized in the left panel of Figure~\ref{fig:summary_ffraction_and_convergence}. The right panel of this figure shows the effective convergence factors of the resulting two-level multigrid methods. The methods were not directly optimizing for this metric, but we nonetheless see that the RL and greedy grids have generally similar convergence factors. The main exceptions are for cases where the greedy method has dramatically lower F-fraction than the RL method, such as the structured grid family.  Further details on these numerical results are given in Appendix~A.

\begin{figure}
  \centering
  \includegraphics[width = 0.375\textwidth]{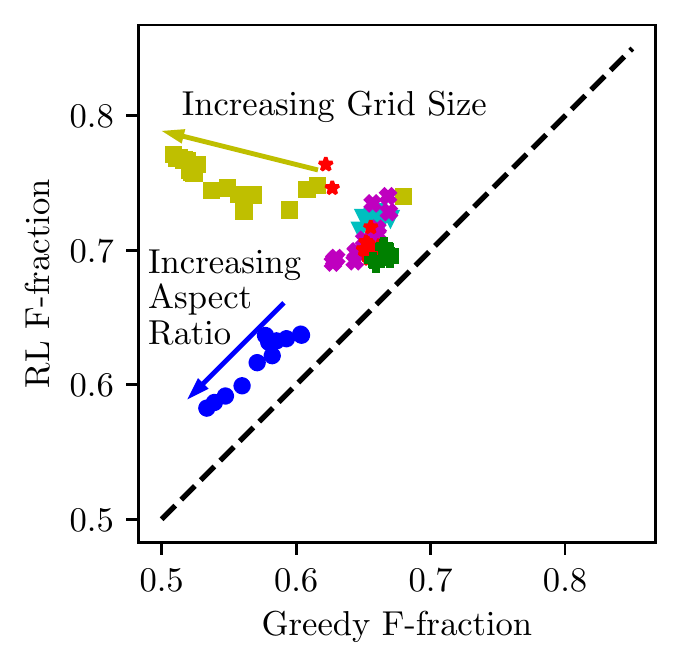}
  \includegraphics[width = 0.59\textwidth]{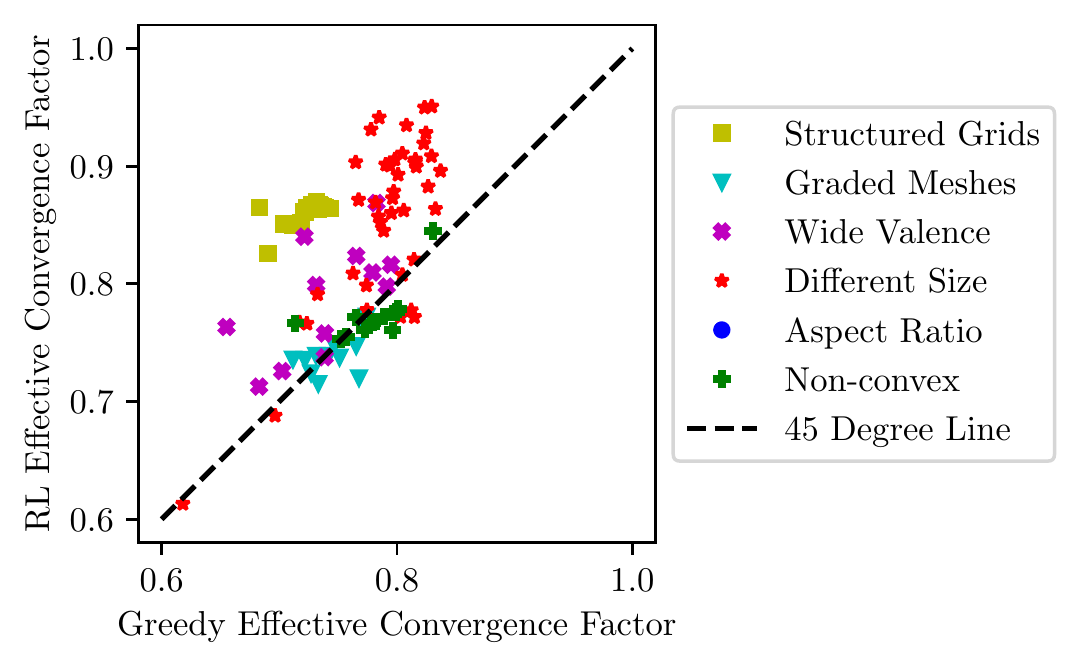}
  \caption{Comparison between RL and greedy algorithms on the F-fraction (higher is better) and effective convergence (lower is better) metrics, for all families of test grids.}
  \label{fig:summary_ffraction_and_convergence}
\end{figure}




\section{Conclusion}
\label{sec:conclusion}

In this paper, we introduce a reinforcement learning (RL) method utilizing graph convolution neural networks (GCNN) to approach the NP-hard problem of grid coarsening for algebraic multigrid methods (AMG). We use our method for the 2D Poisson problem on a wide variety of unstructured grids. Moreover, we demonstrate that our approach strictly outperforms the existing heuristic method \cite{maclachlan2007greedy}, while still preserving the theoretical guarantees for convergence of the resulting multigrid solver. Our RL agent is trained on small grids and uses a dueling advantage and value function decomposition to apply to arbitrarily large grids without performance degradation. Furthermore, we prove that our graph-decomposition-based evaluation algorithm scales linearly with grid size. The main limitations of the current work is that the agent was specialized to a single PDE (Poisson's equation) on 2D grids and it is unclear how readily it will generalize to other problems.  In future work, we propose to extend the approach considered here to more general families of AMG algorithms (and, indeed, to other problems and algorithms in numerical linear algebra), aiming to achieve improved efficiency in the resulting AMG solvers.

\section*{Funding Transparency Statement}
 Funding in direct support of this work: NSERC grant RGPIN- 2019-05692 to SM.  The authors have no competing interests to declare.

\bibliographystyle{unsrt}
\bibliography{refs_paper_two_grid_rl_revised}

\begin{thebibliography}{10}

\bibitem{brandt1984algebraic}
A.~Brandt, S.~F. McCormick, and J.~W. Ruge.
\newblock Algebraic multigrid ({AMG}) for sparse matrix equations.
\newblock In D.~J. Evans, editor, {\em Sparsity and Its Applications}.
  Cambridge University Press, Cambridge, 1984.

\bibitem{ruge1987algebraic}
John~W Ruge and Klaus St{\"u}ben.
\newblock Algebraic multigrid.
\newblock In {\em Multigrid methods}, pages 73--130. SIAM, 1987.

\bibitem{KStuben_1983a}
Klaus St{\"u}ben.
\newblock Algebraic multigrid ({AMG}): experiences and comparisons.
\newblock {\em Appl. Math. Comput.}, 13(3-4):419--451, 1983.

\bibitem{DChen_etal_2011a}
D.~Chen, S.~MacLachlan, and M.~Kilmer.
\newblock Iterative parameter choice and algebraic multigrid for anisotropic
  diffusion denoising.
\newblock {\em SIAM J. Sci. Comput.}, 33:2972--2994, 2011.

\bibitem{Kundu_2016_CVPR}
Abhijit Kundu, Vibhav Vineet, and Vladlen Koltun.
\newblock Feature space optimization for semantic video segmentation.
\newblock In {\em Proceedings of the IEEE Conference on Computer Vision and
  Pattern Recognition (CVPR)}, 2016.

\bibitem{HDeSterck_etal_2008a}
{H. De Sterck}, Thomas~A. Manteuffel, Stephen~F. McCormick, Quoc Nguyen, and
  John Ruge.
\newblock Multilevel adaptive aggregation for {M}arkov chains, with application
  to web ranking.
\newblock {\em SIAM J. Sci. Comput.}, 30(5):2235--2262, 2008.

\bibitem{ABrandt_1986a}
A.~Brandt.
\newblock Algebraic multigrid theory: the symmetric case.
\newblock {\em Appl. Math. Comput.}, 19(1-4):23--56, 1986.

\bibitem{JMandel_1988a}
J.~Mandel.
\newblock Algebraic study of multigrid methods for symmetric, definite
  problems.
\newblock {\em Appl. Math. Comput.}, 25:39--56, 1988.

\bibitem{SPMacLachlan_TAManteuffel_SFMcCormick_2006a}
S.~MacLachlan, T.~Manteuffel, and S.~McCormick.
\newblock Adaptive reduction-based {AMG}.
\newblock {\em Numer. Linear Alg. Appl.}, 13:599--620, 2006.

\bibitem{ANapov_YNotay_2012a}
Artem Napov and Yvan Notay.
\newblock An algebraic multigrid method with guaranteed convergence rate.
\newblock {\em SIAM Journal on Scientific Computing}, 34(2):A1079--A1109, 2012.

\bibitem{JBrannick_etal_2013}
J.~Brannick, Y.~Chen, J.~Kraus, and L.~Zikatanov.
\newblock Algebraic multilevel preconditioners for the graph {L}aplacian based
  on matching in graphs.
\newblock {\em SIAM Journal on Numerical Analysis}, 51(3):1805--1827, 2013.

\bibitem{2014_MaOl_amgtheory}
Scott~P. MacLachlan and Luke~N. Olson.
\newblock Theoretical bounds for algebraic multigrid performance: review and
  analysis.
\newblock {\em Numerical Linear Algebra with Applications}, 21(2):194--220,
  2014.

\bibitem{luz2020learning}
Ilay Luz, Meirav Galun, Haggai Maron, Ronen Basri, and Irad Yavneh.
\newblock Learning algebraic multigrid using graph neural networks.
\newblock In {\em International Conference on Machine Learning}, pages
  6489--6499. PMLR, 2020.

\bibitem{greenfeld2019learning}
Daniel Greenfeld, Meirav Galun, Ronen Basri, Irad Yavneh, and Ron Kimmel.
\newblock Learning to optimize multigrid {PDE} solvers.
\newblock In {\em International Conference on Machine Learning}, pages
  2415--2423. PMLR, 2019.

\bibitem{cleary1998coarse}
Andrew~J Cleary, Robert~D Falgout, Jim~E Jones, et~al.
\newblock Coarse-grid selection for parallel algebraic multigrid.
\newblock In {\em International Symposium on Solving Irregularly Structured
  Problems in Parallel}, pages 104--115. Springer, 1998.

\bibitem{DMAlber_LNOlson_2007a}
David~M. Alber and Luke~N. Olson.
\newblock Parallel coarse-grid selection.
\newblock {\em Numerical Linear Algebra with Applications}, 14(8):611--643,
  2007.

\bibitem{maclachlan2007greedy}
Scott MacLachlan and Yousef Saad.
\newblock A greedy strategy for coarse-grid selection.
\newblock {\em SIAM Journal on Scientific Computing}, 29(5):1825--1853, 2007.

\bibitem{Zaman_2021a}
T.~U. Zaman, S.~P. MacLachlan, L.~N. Olson, and M.~West.
\newblock Coarse-grid selection using simulated annealing.
\newblock {\em arXiv preprint arXiv:2105.13280}, 2021.

\bibitem{du2017topology}
Jian Du, Shanghang Zhang, Guanhang Wu, Jos{\'e}~MF Moura, and Soummya Kar.
\newblock Topology adaptive graph convolutional networks.
\newblock {\em arXiv preprint arXiv:1710.10370}, 2017.

\bibitem{bell2008algebraic}
W.~N. Bell.
\newblock {\em Algebraic multigrid for discrete differential forms}.
\newblock {Ph.D.} thesis, University of Illinois at Urbana-Champaign, 2008.

\bibitem{lloyd1982least}
Stuart Lloyd.
\newblock Least squares quantization in {PCM}.
\newblock {\em IEEE transactions on information theory}, 28(2):129--137, 1982.

\bibitem{CWOosterlee_RWienands_2003a}
C.~W. Oosterlee and R.~Wienands.
\newblock A genetic search for optimal multigrid components within a {F}ourier
  analysis setting.
\newblock {\em SIAM Journal on Scientific Computing}, 24(3):924--944, 2003.

\bibitem{schmitt2019optimizing}
Jonas Schmitt, Sebastian Kuckuk, and Harald K{\"o}stler.
\newblock Optimizing geometric multigrid methods with evolutionary computation.
\newblock {\em arXiv preprint arXiv:1910.02749}, 2019.

\bibitem{hsieh2019learning}
Jun-Ting Hsieh, Shengjia Zhao, Stephan Eismann, Lucia Mirabella, and Stefano
  Ermon.
\newblock Learning neural {PDE} solvers with convergence guarantees.
\newblock {\em arXiv preprint arXiv:1906.01200}, 2019.

\bibitem{katrutsa2020black}
Alexandr Katrutsa, Talgat Daulbaev, and Ivan Oseledets.
\newblock Black-box learning of multigrid parameters.
\newblock {\em Journal of Computational and Applied Mathematics}, 368:112524,
  2020.

\bibitem{Schrittwieser2020MuZero}
Julian Schrittwieser, Ioannis Antonoglou, Thomas Hubert, Karen Simonyan,
  Laurent Sifre, Simon Schmitt, Arthur Guez, Edward Lockhart, Demis Hassabis,
  Thore Graepel, Timothy Lillicrap, and David Silver.
\newblock Mastering {A}tari, {G}o, chess and shogi by planning with a learned
  model.
\newblock {\em Nature}, 588:604--609, 2020.

\bibitem{Ibarz_2021}
Julian Ibarz, Jie Tan, Chelsea Finn, Mrinal Kalakrishnan, Peter Pastor, and
  Sergey Levine.
\newblock How to train your robot with deep reinforcement learning: lessons we
  have learned.
\newblock {\em The International Journal of Robotics Research},
  40(4-5):698--721, 2021.

\bibitem{mnih2015human}
Volodymyr Mnih, Koray Kavukcuoglu, David Silver, Andrei~A Rusu, Joel Veness,
  Marc~G Bellemare, Alex Graves, Martin Riedmiller, Andreas~K Fidjeland, Georg
  Ostrovski, et~al.
\newblock Human-level control through deep reinforcement learning.
\newblock {\em nature}, 518(7540):529--533, 2015.

\bibitem{schulman2017proximal}
John Schulman, Filip Wolski, Prafulla Dhariwal, Alec Radford, and Oleg Klimov.
\newblock Proximal policy optimization algorithms.
\newblock {\em arXiv preprint arXiv:1707.06347}, 2017.

\bibitem{van2016deep}
Hado Van~Hasselt, Arthur Guez, and David Silver.
\newblock Deep reinforcement learning with double {Q}-learning.
\newblock In {\em Proceedings of the AAAI Conference on Artificial
  Intelligence}, volume~30, 2016.

\bibitem{wang2016dueling}
Ziyu Wang, Tom Schaul, Matteo Hessel, Hado Hasselt, Marc Lanctot, and Nando
  Freitas.
\newblock Dueling network architectures for deep reinforcement learning.
\newblock In {\em International conference on machine learning}, pages
  1995--2003. PMLR, 2016.

\bibitem{wu2020comprehensive}
Zonghan Wu, Shirui Pan, Fengwen Chen, Guodong Long, Chengqi Zhang, and S~Yu
  Philip.
\newblock A comprehensive survey on graph neural networks.
\newblock {\em IEEE transactions on neural networks and learning systems},
  2020.

\bibitem{bruna2013spectral}
Joan Bruna, Wojciech Zaremba, Arthur Szlam, and Yann LeCun.
\newblock Spectral networks and locally connected networks on graphs.
\newblock {\em arXiv preprint arXiv:1312.6203}, 2013.

\bibitem{defferrard2016convolutional}
Micha{\"e}l Defferrard, Xavier Bresson, and Pierre Vandergheynst.
\newblock Convolutional neural networks on graphs with fast localized spectral
  filtering.
\newblock {\em arXiv preprint arXiv:1606.09375}, 2016.

\bibitem{kipf2016semi}
Thomas~N Kipf and Max Welling.
\newblock Semi-supervised classification with graph convolutional networks.
\newblock {\em arXiv preprint arXiv:1609.02907}, 2016.

\bibitem{gilmer2017neural}
Justin Gilmer, Samuel~S Schoenholz, Patrick~F Riley, Oriol Vinyals, and
  George~E Dahl.
\newblock Neural message passing for quantum chemistry.
\newblock In {\em International Conference on Machine Learning}, pages
  1263--1272. PMLR, 2017.

\bibitem{battaglia2018relational}
Peter~W Battaglia, Jessica~B Hamrick, Victor Bapst, Alvaro Sanchez-Gonzalez,
  Vinicius Zambaldi, Mateusz Malinowski, Andrea Tacchetti, David Raposo, Adam
  Santoro, Ryan Faulkner, et~al.
\newblock Relational inductive biases, deep learning, and graph networks.
\newblock {\em arXiv preprint arXiv:1806.01261}, 2018.

\bibitem{briggs2000multigrid}
William~L Briggs, Van~Emden Henson, and Steve~F McCormick.
\newblock {\em A multigrid tutorial}.
\newblock SIAM, 2000.

\bibitem{Schlmer_pygmsh_A_Python}
Nico Schl{\"o}mer.
\newblock {pygmsh: A Python frontend for Gmsh}, 2021.
\newblock (GPL-3.0 License).

\bibitem{Fey_Lenssen_2019}
Matthias Fey and Jan~E. Lenssen.
\newblock Fast graph representation learning with {PyTorch Geometric}.
\newblock In {\em ICLR Workshop on Representation Learning on Graphs and
  Manifolds}, 2019.
\newblock (code is MIT licensed).

\bibitem{OlSc2018}
L.~N. Olson and J.~B. Schroder.
\newblock {PyAMG}: Algebraic multigrid solvers in {Python}, 2018.
\newblock Release 4.0 (MIT licensed).

\bibitem{hagberg2008exploring}
Aric Hagberg, Pieter Swart, and Daniel S~Chult.
\newblock Exploring network structure, dynamics, and function using networkx.
\newblock Technical report, Los Alamos National Lab.(LANL), Los Alamos, NM
  (United States), 2008.
\newblock (code is BSD licensed).

\end{thebibliography}

\end{document}